\theoremstyle{plain}
\newtheorem{theorem}{Theorem}
\theoremstyle{definition}
\theoremstyle{remark}
\newtheorem*{remark}{Remark}
\title{Duality in STRIPS planning} 
\author{Martin Suda \\
Max-Planck-Institut f\"ur Informatik, Saarbr\"ucken, Germany\\
Saarland University, Saarbr\"ucken, Germany\\
Charles University, Prague, Czech Republic}
\begin{document}

\maketitle


\begin{abstract}
We describe a duality mapping between STRIPS planning tasks.
By exchanging the initial and goal conditions, taking their
respective complements, and swapping for every action its
precondition and delete list, one obtains for every STRIPS
task its dual version, which has a solution if and only if
the original does. This is proved by showing that the described
transformation essentially turns progression (forward search) 
into regression (backward search) and vice versa.

The duality sheds new light on STRIPS planning by allowing a transfer
of ideas from one search approach to the other. It can be used to
construct new algorithms from old ones, or (equivalently) to obtain
new benchmarks from existing ones. Experiments show that the dual
versions of IPC benchmarks are in general quite difficult for modern planners.
This may be seen as a new challenge. On the other hand, the cases where the dual
versions are easier to solve demonstrate that the duality can also be made useful in practice.
\end{abstract}

\section*{Preamble}
The main theorem of this paper can be recovered already from \cite{massey_directions},
where it follows from a more general, but perhaps a less elegant result.
A proof similar to the one presented here can be found in \cite{pettersson_reversed}.


\section{Introduction}

Propositional STRIPS language is one of the favourite formalisms for describing planning tasks.
A STRIPS task description consists of an initial and goal condition formed by conjunctions of propositional atoms
and of a set of actions made up by a precondition, add and delete lists.
Despite its simplicity, the modelling power of the STRIPS formalism already captures the complexity class PSPACE \cite{bylander-strips}.
Also, STRIPS lies in the core of the more expressive PDDL language \cite{mcdermott-ipc-pddl} used
for representing benchmarks in the International Planning Competition.

Classical search is one of the basic but also most successful approaches to
determining whether a given planning task has a solution.
The search may proceed either in the forward direction
starting from the initial state and applying actions until a goal state is reached,
or in the backward direction where the goal condition is regressed over actions 
to produce sub-goals until a sub-goal satisfied by the initial state is obtained.
Forward search is typically termed progression,
while backward search is called regression.

In this paper we show that from the computational perspective
there is no real difference between progression and regression in STRIPS planning.
This is very surprising because progression is working with single states only
while the sub-goal conditions in regression represent whole state sets.
We show our result by describing a duality mapping working on the domain of all STRIPS planning tasks.
Performing regression on the original task is shown equivalent to performing progression on the dual.

The existence of the duality mapping has some additional interesting consequences.
For instance, any notion originally conceived and developed with one of the search approaches in mind
has a dual counterpart within the other approach. We give examples
of this phenomenon in Section~\ref{sec_dual}, one of them being the dual
of the relevance condition, an important ingredient in pruning the regression search space. 
The duality can also be used to construct new algorithms from old ones
and to obtain new benchmarks from existing ones.
Thus a purely theoretical concept at first sight,
the duality also has immediate implications for practice.

The rest of the paper is organized as follows. 
After giving the necessary preliminaries in Section~\ref{sec_prelim},
we recall the details about progression and regression relevant
for our work in Section~\ref{sec_prog_reg}.
Our duality mapping is defined and its properties are stated and proven in Section~\ref{sec_dual}.
We subsequently discuss immediate theoretical implications of the duality.
Section~\ref{sec_experiment} then reports on our experiments.
We compare the performance of several modern planners on dual versions
of IPC benchmarks and also show how a planner can be adapted 
with the help of the duality to solve benchmarks previously out of reach.
Finally, the concluding Section~\ref{sec_conclusion} discusses 
the applications of the duality from a broader perspective.


\section{Preliminaries}

\label{sec_prelim}

A propositional STRIPS \emph{planning task} is defined as a tuple $\mathcal{P} = (X,I,G,\mathcal{A})$, 
where $X$ is a finite set of \emph{atoms}, $I \subseteq X$ is the \emph{initial condition}, 	
$G \subseteq X$ is the \emph{goal condition}, and $\mathcal{A}$ a finite set of \emph{actions}.
Every action $a \in \mathcal{A}$ is a triple $a = (\mathit{pre}_a,\mathit{add}_a,\mathit{del}_a)$
of subsets of $X$ referred to as the action's \emph{precondition}, \emph{add list}, and \emph{delete list}, respectively.


The semantics is given by associating each planning task $\mathcal{P} = (X,I,G,\mathcal{A})$ with a \emph{transition system}
$\mathcal{T}_\mathcal{P} = (S,I,S_G,T)$, where the set of \emph{world states} $S = 2^X$ is identified with the set of all subsets of $X$,
the \emph{initial state} is the subset $I$, the \emph{goal states} $S_G = \{ s \in S \ |\ G \subseteq s \}$ are those states 
that satisfy the goal condition $G$, and, finally, the \emph{transition relation} $T$, 
which consists of state-action-state triples called \emph{transitions}, is defined as follows:
\[T = \{ s\stackrel{a}{\rightarrow}s' \ |\ \mathit{pre}_a\subseteq s \land s' = (s \cup \mathit{add}_a) \setminus \mathit{del}_a \} .\]
A planning task \emph{has a solution} if there is a \emph{path} in the respective transition system from the initial state to a goal state,
i.e. if there is a finite sequence of transitions $s_0 \stackrel{a_0}{\rightarrow} s_1 \stackrel{a_1}{\rightarrow} s_2 \ldots s_{k-1} \stackrel{a_{k-1}}{\rightarrow} s_k$ such that $s_0 = I$ and $s_k \in S_G$.


\section{Progression and regression}

\label{sec_prog_reg}


There are two basic approaches to searching for solutions of planning tasks:
progression and regression \cite{russel-norvig-ai-modern-approach}.
Progression, or simply forward search, 
proceeds systematically from the initial state and applies actions until a goal state is reached.
Regression, or backward search, on the other hand, 
regresses the goal condition over actions to produce sub-goals until a sub-goal contained in the initial state is obtained.

In what follows we abstract away the actual search algorithm
and only focus on properties of the two approaches that are important for showing their correctness.
These properties depend solely on  three ``entry point'' procedures,
by which the actual search algorithm could be parameterized:
\begin{description} 
\item
	$\texttt{start}()$, which generates a start \emph{search node},
\item
	$\texttt{is\_target}(t)$, which tests whether a given search node is a \emph{target} node, and
\item
	$\texttt{succ}(t)$, which generates  \emph{successor} nodes $t'$ of the given search node $t$.
\end{description}

\begin{table*}
\begin{center}
\begin{tabular}{l@{\ \ }|@{\ \ }c@{\ \ }|@{\ \ }c}
                        & progression: $\_^\mathit{Pr}$ & regression: $\_^\mathit{Re}$ \\
\hline
$\texttt{start}()$      & $I$                & $G$ \\
$\texttt{is-target}(t)$ & $G \subseteq t$    & $t \subseteq I$ \\
$\texttt{succ}(t)$      & $\{\ t' \ |\ \exists a \in \mathcal{A}\ .$\hspace{0.6cm} & $\{\ t' \ |\ \exists  a \in \mathcal{A}\ .$\hspace{0.6cm} \\           
                        & \hspace{1.0cm}$\mathit{pre}_a \subseteq t \ \land$ & \hspace{1.0cm}$ \mathit{del}_a \cap t = \emptyset \ \land$ \\    
                      & \hspace{0.9cm}$t' = (t \cup \mathit{add}_a) \setminus \mathit{del}_a \ \}$ & \hspace{0.9cm}$t' = (t \setminus \mathit{add}_a) \cup \mathit{pre}_a \ \}$ \\
\end{tabular}
\end{center}
\caption{Instantiating progression and regression for a plannig task $\mathcal{P} = (X,I,G,\mathcal{A})$. }
\label{table_progreg}
\end{table*}

Given a plannig task $\mathcal{P} = (X,I,G,\mathcal{A})$,
the respective implementations of the procedures for progression and regression are summarized in Table~\ref{table_progreg}.
Let us first focus on progression. There, each search node directly corresponds to a world state,
or, more specifically, to a world state reachable from the initial state.
The start search node $\texttt{start}^\mathit{Pr}()$ is equal to the initial state $I$ itself,
the $\texttt{is\_target}^\mathit{Pr}(t)$ procedure tests whether the given node satisfies the goal condition,
and the successor nodes $\texttt{succ}^\mathit{Pr}(t)$ are constructed by taking
for every action $a \in \mathcal{A}$ for which the \emph{applicability} condition $\mathit{pre}_a \subseteq t$ is satisfied 
the successor node $t' = (t \cup \mathit{add}_a) \setminus \mathit{del}_a$.
This naturally corresponds to the definition of the transition system $\mathcal{T}_\mathcal{P}$
and so the proof of the following correctness theorem for progression becomes immediate.

\begin{theorem} \label{thm_progrssion_correct}
A planning task $\mathcal{P} = (X,I,G,\mathcal{A})$ has a solution iff
there exists a sequence of search nodes $t_0, \ldots, t_k$ such that $t_0 = \texttt{start}^\mathit{Pr}()$, $\texttt{is\_target}^\mathit{Pr}(t_k)$,
and for every $i = 1,\ldots, k \ \ t_i \in \texttt{succ}^\mathit{Pr}(t_{i-1})$.
\end{theorem}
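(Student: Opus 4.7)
The plan is to observe that the three progression entry points in Table~\ref{table_progreg} have been defined to mirror the transition system $\mathcal{T}_\mathcal{P}$ literally, so the theorem reduces to an essentially notational translation between two descriptions of the same object. I would identify progression search nodes $t$ with world states $s \in S$ and then establish the equivalence by a direct bijection between solution paths in $\mathcal{T}_\mathcal{P}$ and sequences of search nodes satisfying the three conditions in the theorem statement.

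For the ($\Rightarrow$) direction, I would start from a solution path $s_0 \stackrel{a_0}{\rightarrow} s_1 \cdots \stackrel{a_{k-1}}{\rightarrow} s_k$ with $s_0 = I$ and $s_k \in S_G$, and simply set $t_i := s_i$. Then $t_0 = I = \texttt{start}^\mathit{Pr}()$ by definition, and $s_k \in S_G$ is by definition $G \subseteq s_k$, i.e.\ $\texttt{is\_target}^\mathit{Pr}(t_k)$ holds. For each step, the definition of the transition relation $T$ gives $\mathit{pre}_{a_i} \subseteq s_i$ and $s_{i+1} = (s_i \cup \mathit{add}_{a_i}) \setminus \mathit{del}_{a_i}$, which is exactly the membership condition $t_{i+1} \in \texttt{succ}^\mathit{Pr}(t_i)$ with witness action $a_i$.

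For the ($\Leftarrow$) direction, the same identification works in reverse: given a sequence $t_0,\ldots,t_k$ meeting the conditions of the theorem, the fact that $t_{i+1} \in \texttt{succ}^\mathit{Pr}(t_i)$ provides an action $a_i$ witnessing a transition $t_i \stackrel{a_i}{\rightarrow} t_{i+1}$ in $T$. Concatenating these transitions and using $t_0 = I$ and $G \subseteq t_k$ produces a solution path in $\mathcal{T}_\mathcal{P}$.

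There is no real obstacle here; the only ``content'' of the proof is to point out the matching of the applicability condition and successor formula in $\texttt{succ}^\mathit{Pr}$ with the definition of $T$, and of $\texttt{start}^\mathit{Pr}$ and $\texttt{is\_target}^\mathit{Pr}$ with the initial and goal states of $\mathcal{T}_\mathcal{P}$. The proof is worth stating explicitly mainly so that the analogous (and genuinely less trivial) correctness statement for regression, which is to be proved via the duality, can be contrasted with it.
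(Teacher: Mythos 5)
Your proof is correct and matches the paper's approach exactly: the paper likewise treats the theorem as immediate from the fact that $\texttt{start}^\mathit{Pr}$, $\texttt{is\_target}^\mathit{Pr}$, and $\texttt{succ}^\mathit{Pr}$ are verbatim restatements of the initial state, goal states, and transition relation of $\mathcal{T}_\mathcal{P}$, and you have merely written out the resulting bijection between solution paths and accepting node sequences. (One small side remark: in the paper the regression correctness theorem is also proved by a standard direct argument, not via the duality; it is the duality theorem itself that is later derived from the two correctness results.)
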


In the case of regression, a search node is also represented by a subset of $X$,
but it should be viewed as a sub-goal to be met, corresponding to a set of world states that satisfy it.
Here, the search nodes are manipulated in the following way. 
The start search node $\texttt{start}^\mathit{Re}()$ is identified with the (sub-)goal $G$ itself,
the $\texttt{is\_target}^\mathit{Re}(t)$ procedure returns true iff the initial state $I$ satisfies $t$,
and the successor search nodes $\texttt{succ}^\mathit{Re}(t)$ are generated
by collecting the regressed sub-goals $t' = (t \setminus \mathit{add}_a) \cup \mathit{pre}_a$
for every action $a \in \mathcal{A}$ for which the \emph{consistency} 
condition $\mathit{del}_a \cap t = \emptyset$ holds. The key property of regression is
that in every world state $s$ satisfying the regressed sub-goal $t'$ (i.e., in every $s$ such that $t' \subseteq s$)
the action $a$ is applicable ($\mathit{pre}_a \subseteq s$) and leads to a world state
that satisfies the original sub-goal $t$. Consistency is needed to ensure that the action doesn't undo 
any desired atom. 

\begin{remark}
Another property that is typically required, apart from consistency, 
is relevance.
An action $a \in \mathcal{A}$ is said to be \emph{relevant} 
for achieving a sub-goal $t$ iff $\mathit{add}_a \cap t \neq \emptyset$, i.e., if when applied, it achieves 
a part of the sub-goal. Because relevance is only important for efficiency and not for correctness of algorithms 
based on regression, we set it aside for now, to keep things simple, and return to it in a later discussion.
\end{remark}

The correctness theorem for regression has exactly the same form as the one for progression. 
We don't detail its proof, which is standard and basically just combines the insights mentioned above.

\begin{theorem} \label{thm_regrssion_correct}
A planning task $\mathcal{P} = (X,I,G,\mathcal{A})$ has a solution iff
there exists a sequence of search nodes $t_0, \ldots, t_k$ such that $t_0 = \texttt{start}^\mathit{Re}()$, $\texttt{is\_target}^\mathit{Re}(t_k)$,
and for every $i = 1,\ldots, k \ \ t_i \in \texttt{succ}^\mathit{Re}(t_{i-1})$.
\end{theorem}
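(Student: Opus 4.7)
The plan is to mirror the structure of the progression proof while pushing everything through the key regression property stated informally just before the theorem. I first isolate that property as a small lemma: if $t' = (t \setminus \mathit{add}_a) \cup \mathit{pre}_a$ with $\mathit{del}_a \cap t = \emptyset$, then any world state $s$ with $t' \subseteq s$ satisfies $\mathit{pre}_a \subseteq s$, and the successor $s' = (s \cup \mathit{add}_a) \setminus \mathit{del}_a$ satisfies $t \subseteq s'$. The first conjunct is immediate since $\mathit{pre}_a \subseteq t' \subseteq s$. For the second, I split an arbitrary $x \in t$ on whether $x \in \mathit{add}_a$: if yes, $x \in s \cup \mathit{add}_a$ and $x \notin \mathit{del}_a$ by consistency; if no, $x \in t \setminus \mathit{add}_a \subseteq t' \subseteq s$, again with $x \notin \mathit{del}_a$ by consistency. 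So $x \in s'$ in both cases.

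For the ``if'' direction, suppose a regression sequence $t_0 = G, t_1, \ldots, t_k$ exists with $t_k \subseteq I$, where each step $t_i \to t_{i+1}$ uses some action $a_i \in \mathcal{A}$. I construct a path in $\mathcal{T}_\mathcal{P}$ by setting $s_0 = I$ and firing the actions in reverse order $a_{k-1}, a_{k-2}, \ldots, a_0$, inductively proving the invariant $t_{k-i} \subseteq s_i$. The base case is $t_k \subseteq I = s_0$; the inductive step is exactly one application of the lemma above, which also furnishes applicability of $a_{k-1-i}$ at $s_i$ so that $s_{i+1}$ is actually defined. At $i = k$, the invariant gives $G = t_0 \subseteq s_k$, so $s_k$ is a goal state and we have a solution.

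For the ``only if'' direction, suppose a solution $I = s_0 \stackrel{a_0}{\to} s_1 \stackrel{a_1}{\to} \cdots \stackrel{a_{k-1}}{\to} s_k$ with $G \subseteq s_k$ exists. I define $t_0 = G$ and $t_{i+1} = (t_i \setminus \mathit{add}_{a_{k-1-i}}) \cup \mathit{pre}_{a_{k-1-i}}$ and prove by induction the conjunction of (a) consistency $\mathit{del}_{a_{k-1-i}} \cap t_i = \emptyset$, needed so that the step is a legal regression successor, and (b) the invariant $t_i \subseteq s_{k-i}$. Both follow from the same short case analysis: any $x \in t_i$ lies in $s_{k-i} = (s_{k-i-1} \cup \mathit{add}_{a_{k-1-i}}) \setminus \mathit{del}_{a_{k-1-i}}$, which forces $x \notin \mathit{del}_{a_{k-1-i}}$ (giving (a)); and for $x \in t_{i+1}$, either $x \in \mathit{pre}_{a_{k-1-i}} \subseteq s_{k-i-1}$ or $x \in t_i \setminus \mathit{add}_{a_{k-1-i}}$, in which case the description of $s_{k-i}$ forces $x \in s_{k-i-1}$ (giving (b)). At $i = k$ the invariant yields $t_k \subseteq s_0 = I$, the target condition.

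The underlying set manipulations are entirely routine; the one trap I would be careful about is the index reversal between the forward plan and the backward regression sequence, so I would fix explicit conventions for which action is used at each step before launching either induction. Beyond that, there is no real obstacle --- the theorem is exactly the dual of Theorem~\ref{thm_progrssion_correct} and the single-step lemma does all the work.
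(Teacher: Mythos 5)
Your proof is correct and follows exactly the route the paper indicates: it isolates as a lemma the ``key property of regression'' stated just before the theorem (applicability plus preservation of the sub-goal under consistency) and runs the two straightforward inductions, with the index reversal handled properly. The paper deliberately omits these details as standard, so your write-up is simply the fleshed-out version of the paper's own sketch.
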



\section{Duality}

\label{sec_dual}

When looking at Table~\ref{table_progreg}, which compares progression and regression, 
it is not difficult to observe certain formal similarities. For instance, the role played
by the initial condition $I$ in progression is similar to the one played by $G$ in regression and vice versa.
Similarly, the precondition $\mathit{pre}_a$ and delete list $\mathit{del}_a$
of the considered action $a$ seem to be exchanging roles in a certain way. 
In this section we describe an involutory mapping $\_^d : \mathit{STRIPS} \rightarrow \mathit{STRIPS}$
acting on the class of all STRIPS planning tasks that shows that the above similarities 
are not a coincidence and that progression and regression are more closely related than is would seem at first sight.

For an action $a = (\mathit{pre}_a,\mathit{add}_a,\mathit{del}_a)$ a dual action $a^d$ is
formed by exchanging the precondition and delete list: $a^d= (\mathit{del}_a,\mathit{add}_a,\mathit{pre}_a)$.
For a set of actions $\mathcal{A}$ the set of dual actions is $\mathcal{A}^d = \{ a^d \ |\ a \in \mathcal{A}\}$.
Now, given a planning task $\mathcal{P} = (X,I,G,\mathcal{A})$ the dual task $\mathcal{P}^d$ is obtained 
by exchanging the initial and goal conditions while taking their complements with respect to $X$, and using the dual action set:
\[ \mathcal{P}^d = (X,(X \setminus G),( X \setminus I),\mathcal{A}^d).\]


We can now state the central theorem of this paper.
\begin{theorem} \label{thm_duality}
For every planning task $\mathcal{P} = (X,I,G,\mathcal{A})$ 
the dual task $\mathcal{P}^d$ has a solution if and only if $\mathcal{P}$ does.
\end{theorem}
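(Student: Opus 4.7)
The plan is to reduce the statement to the two correctness theorems already proved: Theorem~\ref{thm_progrssion_correct} tells us $\mathcal{P}$ has a solution iff a progression sequence for $\mathcal{P}$ exists, and Theorem~\ref{thm_regrssion_correct} applied to $\mathcal{P}^d$ tells us $\mathcal{P}^d$ has a solution iff a regression sequence for $\mathcal{P}^d$ exists. So it suffices to exhibit a length-preserving bijection between progression sequences on $\mathcal{P}$ and regression sequences on $\mathcal{P}^d$, which would realize in a concrete way the informal claim from the abstract that duality ``turns progression into regression''.

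The natural candidate is complementation inside $X$: given a progression sequence $t_0,\ldots,t_k$ for $\mathcal{P}$, set $u_i = X \setminus t_i$, and check that $u_0,\ldots,u_k$ is a regression sequence for $\mathcal{P}^d$. Three verifications are needed, corresponding to the three rows of Table~\ref{table_progreg} after substituting the dual components $I^d=X\setminus G$, $G^d=X\setminus I$ and $a^d=(\mathit{del}_a,\mathit{add}_a,\mathit{pre}_a)$. The start-node check is immediate: $u_0 = X\setminus I = G^d$. The target check is the elementary equivalence $G\subseteq t_k \iff u_k \subseteq X\setminus G = I^d$. For the successor step, the progression applicability condition $\mathit{pre}_a\subseteq t$ flips into $\mathit{pre}_a\cap u = \emptyset$, which is precisely the consistency condition for $a^d$ on $\mathcal{P}^d$ since $\mathit{del}_{a^d}=\mathit{pre}_a$.

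The main technical obstacle, and essentially the only nontrivial content of the proof, is showing that the progression state update translates correctly under complementation, i.e.\ the set-theoretic identity
\[X\setminus\bigl((t\cup\mathit{add}_a)\setminus\mathit{del}_a\bigr) \;=\; \bigl((X\setminus t)\setminus\mathit{add}_a\bigr)\cup\mathit{del}_a,\]
whose right-hand side is exactly $(u\setminus\mathit{add}_{a^d})\cup\mathit{pre}_{a^d}$, the regression update for $a^d$ on $\mathcal{P}^d$. This identity is routine De~Morgan manipulation inside $X$, requiring only $\mathit{add}_a,\mathit{del}_a\subseteq X$; still, it has to be written down carefully because it is where the swap of $\mathit{pre}$ with $\mathit{del}$ under $\_^d$ actually pays off. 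Once this forward direction is in hand, the reverse implication comes for free, because the transformation $\_^d$ is manifestly involutory, i.e.\ $(\mathcal{P}^d)^d = \mathcal{P}$, so applying the same bijection starting from $\mathcal{P}^d$ yields the converse.
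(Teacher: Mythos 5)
Your proof is correct and follows essentially the same route as the paper: both reduce the claim to Theorems~\ref{thm_progrssion_correct} and~\ref{thm_regrssion_correct} and then verify, via the same De~Morgan identities on the three entry-point procedures of Table~\ref{table_progreg}, that complementation within $X$ carries one search onto the other. The only (cosmetic) difference is orientation --- you match progression on $\mathcal{P}$ with regression on $\mathcal{P}^d$, whereas the paper matches regression on $\mathcal{P}$ with progression on $\mathcal{P}^d$ --- and these are interchangeable by the involution $(\mathcal{P}^d)^d = \mathcal{P}$ that you invoke for the converse.
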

\begin{proof}
If a planning task has a solution, it can be found by both progression and regression, 
because they are both correct (Theorem~\ref{thm_progrssion_correct} and \ref{thm_regrssion_correct}).
We prove this theorem by showing that regression for $\mathcal{P}$ performs exactly the same operations
as progression for $\mathcal{P}^d$ when the search nodes are represented in a complemented form for the latter.
This is done in three steps corresponding to the three ``entry point'' procedures of Table~\ref{table_progreg}.
First, we realize that 
\[ \texttt{start}^\mathit{Re}_{\mathcal{P}}() = X \setminus \texttt{start}^\mathit{Pr}_{\mathcal{P}^d}(). \]
In words, the start search node of regression for $\mathcal{P}$, is the complement (with respect to $X$)
of the start search node of progression for $\mathcal{P}^d$. Similarly, a search node $t \subseteq X$ is 
a target node in regression for $\mathcal{P}$ if and only if $(X \setminus t)$ is a target node in progression for $\mathcal{P}^d$:
\[ \texttt{is\_target}^\mathit{Re}_{\mathcal{P}}(t) = \texttt{is\_target}^\mathit{Pr}_{\mathcal{P}^d}(X \setminus t), \]
which 
follows from the equivalence $a \subseteq b \leftrightarrow (X \setminus b) \subseteq (X \setminus a)$. 
Finally, the successor nodes of a search node $t \subseteq X$ in regression for $\mathcal{P}$ can be computed as 
complements of successor nodes of $(X \setminus t)$ in progression for $\mathcal{P}^d$:
\[ \texttt{succ}^\mathit{Re}_{\mathcal{P}}(t) = \{ (X \setminus t_0) \ |\ t_0 \in \texttt{succ}^\mathit{Pr}_{\mathcal{P}^d}( X \setminus t) \}. \]
For this last point, it is sufficient to verify for every action $a \in \mathcal{A}$ that, first, 
the consistency condition in regression for $\mathcal{P}$
and applicability condition in progression for $\mathcal{P}^d$ are each other's dual:
\begin{align*}
\mathit{del}_a \cap t = \emptyset \ &\leftrightarrow \ \mathit{del}_a \subseteq (X \setminus t) \\
                                    & \leftrightarrow \ \mathit{pre}_{a^d} \subseteq (X \setminus t) ,
\end{align*}
and that, second, regressing $t$ over $a$ yields the complement of applying $a^d$ to the complement of $t$:
\begin{align*}
X \setminus ((t \setminus \mathit{add}_a) \cup \mathit{pre}_a) &= ((X \setminus t) \cup \mathit{add}_a) \setminus \mathit{pre}_a \\
                                                               &= ((X \setminus t) \cup \mathit{add}_{a^d}) \setminus \mathit{del}_{a^d}.
\end{align*}
With these two properties checked (by applying De Morgan's laws for sets) the theorem is proven.
\end{proof}


The most striking consequence of Theorem~\ref{thm_duality} is the discovery that in STRIPS planning
there is no substantial difference between progression and regression.
Indeed, any algorithm based on one of the two approaches may be effectively
turned into an algorithm based on the other by simply applying the duality mapping to the input
as a preprocessing and running the actual algorithm on $\mathcal{P}^d$ instead of on $\mathcal{P}$.
It is then interesting to observe what are the dual counterparts of notions
that were originally conceived and developed with only one of the approaches in mind
and in how do they emerge ``on the other side of the duality''.
We will now comment on some of these observations in the following subsections.


\subsection{Relevance and usefulness}

It was mentioned before that it is important for the efficiency of regression
to only regress over actions that are relevant for the current sub-goal.
Let us repeat that an action $a \in \mathcal{A}$ is relevant for $t$
if and only if $\mathit{add}_a \cap t \neq \emptyset$. 
Regressing over an action that is not relevant for $t$
results in a (possibly strictly) stronger sub-goal $t' \supseteq t$.
We may safely discard $t'$ from consideration, 
because successfully regressing $t'$ is (possibly strictly)
more difficult than successfully regressing $t$.\footnote{If solution can be found from $t'$, it can be found from $t$ as well.} 
This way filtering out non-relevant actions helps to keep the regression search space manageable.

It is now at hand to ask what the dual notion of relevance is.
For lack of invention, we will call it usefulness.
We say that an action $a \in \mathcal{A}$ is \emph{useful} in a state $t$
if and only if the add list of $a$ is not fully contained in $t$.
We see that usefulness is a natural property:
it doesn't make sense to progress via a non-useful action,
because it will never make more atoms true in the resulting state.
The reason why usefulness is generally not mentioned in the literature
is that in typical benchmarks there are seldom actions that would be
applicable and yet not useful in a given state. This is in contrast
with regression where consistency and non-relevance are far less correlated.


\subsection{First add, then delete?}

When defining the result of action application to a state,
one needs to decide in which order should the add list and the delete list be considered.
In particular, if a description of a planning task contains an action $a$
such that $\mathit{add}_a$ and $\mathit{del}_a$ have a non-empty intersection,
the result of applying $a$ to a state $s$ depends on this order.
One can either exclude this possibility up front by requiring 
that for any action the add and delete lists are disjoint,
or, alternatively, to decide on a canonical order of their application. 

There are two remarks we can make here with respect to our duality.
First, if we choose the former option above, i.e., if we require 
that $\mathit{add}_a \cap \mathit{del}_a = \emptyset$ for any $a \in \mathcal{A}$,
we should perhaps (for the sake of symmetry) also require that 
$\mathit{add}_a \cap \mathit{pre}_a = \emptyset$, because that
is exactly the condition under which the order of applying add list 
and the precondition during regression of a sub-goal becomes irrelevant.
Note that this condition also makes sense from the perspective of progression,
because atoms mentioned in the precondition will be preserved by the action
anyway (unless deleted) so they don't need to be mentioned again in the add list.

The second remark relates to the latter option,
when to resolve the above situation a particular add-delete order is chosen as canonical.
Here the duality dictates (with appeal to elegance of the theory)
that adding should happen before deleting, as done in our definition in Section~\ref{sec_prelim}.
It is because only with that order the proof of Theorem~\ref{thm_duality} goes through as presented.
Let us be more specific. In progression we, quite naturally, 
first check the applicability condition $\mathit{pre}_a \subseteq s$,
before applying the effects. That's why the corresponding regression operation
needs to first subtract the add list from the sub-goal,
before adding the preconditions: $t' = (t \setminus \mathit{add}_a) \cup \mathit{pre}_a$.
Finally, dualizing the last equation gives us 
$s' = (s \cup \mathit{add}_a) \setminus \mathit{del}_a$ as promised.
This shouldn't be interpreted as saying that the duality itself
relies on a particular ordering of addition and deletion
in the definition of action application.
Should the other order be adopted instead, however, 
we would need to require that the actions 
of a planning task are normalized beforehand
so that the intersection of add and delete lists is always empty.


\subsection{Semantics of search nodes}

Since the duality exchanges the roles of progression and regression, 
one should ask what happens to the semantics of the search nodes,
which are known to represent world states in progression and 
sets of world states (via conjunctive conditions) in regression.
The surprising answer the duality gives us is that both the views 
are equally valid for both progression and regression.
One just needs to go over to the complement representation to see the other.
We invite the reader to check the details for herself
by replaying the proof of Theorem~\ref{thm_duality} from this perspective.
Note that this observation provides us with a new way 
(arguably less intuitive, but nevertheless a legitimate one)
to justify the correctness of the two approaches.
While this may sometimes simplify argumentations,
the actual implementation ``mechanics'' remains intact.










\subsection{Limitations}


We close this section by discussing the limitations of our duality concept.
A careful analysis of the proof of Theorem~\ref{thm_duality} reveals that 
it substantially relies on the particularly simple form of regression in STRIPS planning.
Essential is the fact that regressed sub-goals may be represented as conjunctions of atoms.
This means the duality doesn't directly carry over to more expressive formalisms, 
which allow negated goals or preconditions. For similar reasons,
extending the duality to Finite Domain Representation (FRD) \cite{helmert-fdr} seems problematic.
The good news is that the duality applies to the lifted version of STRIPS
as realized by the STRIPS subset of the PDDL language \cite{mcdermott-ipc-pddl} used 
in the International Planning Competition (IPC).\footnote{To complement the initial and goal condition,
one first obtains the set of all atoms $X$ by grounding the domain predicates
.} The IPC benchmark set contains more than a thousand practically relevant 
problems to which the duality applies.



\section{Experiment} 

\label{sec_experiment}

The duality mapping we have described in the previous section
provides us with a means of transforming one planning task into another
while preserving the existence of its solution. It is now natural
to ask how difficult are the dual versions of IPC benchmarks for modern planners.
We performed a series of experiments in order to answer this question
and we report on them in this section.

Note that there are two possible ways of interpreting the results.
We may either view the dual versions as new stand-alone problems,
or imagine the duality mapping as part of the algorithm we are currently testing. 
The second case may be understood as an evaluation of a new,
dual algorithm on the original benchmarks.
We will prefer the first view for most of this section,
but adopt the second where it is more natural.

For our experiments, we collected all the benchmarks from 
the satisficing tracks of the International Planning Competitions\footnote{\url{http://ipc.icaps-conference.org/}} 
that are in the STRIPS subset of the PDDL language.\footnote{We dropped the action cost feature where present.}
Together we collected 1564 problems. We then used the preprocessing part
of the planner FF \cite{hoffmann-nebel-ff-jair} to produce a grounded
version of these. Note that FF's relevance analysis was involved in the process,
so all the ``rigid'' predicates that are only used for modelling purposes and
the value of which is not affected by any action were removed.
Let us denote the set of these grounded IPC benchmarks \texttt{ORIG}.

The preprocessing tool was then extended further to implement our duality 
mapping: It first normalizes the actions so that the precondition and delete list
never intersect with the add list. To conform with the official IPC semantics, 
which is "first delete, then add" \cite{fox-long-pddl21},
this is done by performing for every action $a$ the following two assignments
in the prescribed order:
\[ \mathit{del}_a := \mathit{del}_a \setminus \mathit{add}_a; \ \ \ \mathit{add}_a := \mathit{add}_a \setminus \mathit{pre}_a.\]
Then the duality mapping is applied. Let the problems obtained this way
be denoted as \texttt{DUAL}.
All the experiments were performed our servers with 3.16~GHz Xeon CPU, 16~GB RAM, with Debian~6.0.


In the first experiment we ran the following three planners
on both \texttt{ORIG} and \texttt{DUAL} benchmark sets:
\begin{itemize}
\item
	the FF planner \cite{hoffmann-nebel-ff-jair}
	as a baseline representative of heuristic search \cite{bonet-geffner-heuristics} planners,
	
\item
	the LAMA planner \cite{richter-westphal-lama}, another heuristic search planner,
	the winner of the satisfycing track of the last IPC held in 2011, and

\item
	the planner Mp \cite{rintanen-mp}, as a representative of the planning as satisfiability \cite{kautz-selman-envelope} approach.
\end{itemize}
The time limit was set to 180 seconds per problem. 

The results of the first experiment are summarized in Table~\ref{table_results1}.
We see that the problems in \texttt{DUAL} are generally much more difficult to solve than \texttt{ORIG},
and that the SAT-based planner Mp seems to perform better on \texttt{DUAL} than the heuristic search planners.

\begin{table}
\begin{center}
\begin{tabular}{l|ccc}
                        & FF & LAMA & Mp \\
\hline
\texttt{ORIG} &
1009 & 
1192 & 
1114 \\
\texttt{DUAL} &
 136 &
 175 &
 329 \\
\end{tabular}
\end{center}
\caption{First experiment: number of \texttt{ORIG} and \texttt{DUAL} problems solved within 180 seconds by the respective planners.}
\label{table_results1}
\end{table}

We conjecture (and later partially verify) the following reasons for the difficulty of \texttt{DUAL}.
First, the explicit state forward search planners suffer from 
not testing for usefulness of actions. This corresponds to omitting 
the relevance test in the dual, regression-based algorithm
and makes the search space unnecessarily large.
The second reason is that \emph{invariant} information is no longer recovered from the task description by the planners.
Invariant is a property which holds in the initial state and is preserved by all transitions.
While logically redundant, invariants are known to be usually critical for efficiency of SAT-based planners \cite{rintanen-mp}.
Moreover, the existence of simple invariants formed by negative binary clauses is a prerequisite
for the reconstruction of a non-trivial Finite Domain Representation (FDR),
which LAMA is trying to build in its preprocessing phase \cite{helmert-fdr}.
As we independently checked, there are almost no binary clause invariants to be recovered
from the \texttt{DUAL} benchmarks. This means that Mp has to search for plans
without the useful guidance the invariants usually provide and LAMA most 
of the time discovers only trivial, two-valued domains for its finite domain variables.


\begin{remark}
Note that the problems in \texttt{DUAL} still contain the original invariant information,
but it has been turned into \emph{backward invariants}, 
properties of the goal states preserved when traversing the transitions backwards.
Obviously, the planners don't check for backward invariants,
because typically, e.g., on \texttt{ORIG}, it doesn't pay off.
\end{remark}


In our second experiment we set out to discover to what extent do the above reasons 
explain the degraded performance of the planners on \texttt{DUAL}. We focused
on the planner FF for its relative simplicity and modified it in several steps
in order to make it perform better on \texttt{DUAL}. We prepared the following 
versions of the planner:
\begin{itemize}
\item
	FF-U, which checks for usefulness of actions and discards the non-useful ones,
	
\item
	FF-UI, which additionally computes\footnote{We use an efficient implementation of the fixpoint algorithm described in \cite{rintanen-invariant}. }
	binary clause backward invariant,
	and discards successor states that violate it,

\item
	FF-UIN, which additionally turns off enforced hill climbing (see \cite{hoffmann-nebel-ff-jair})
	and always directly starts best first search.\footnote{We observed that enforced hill climbing
	fails on most of the problems in \texttt{DUAL}, so turning it off up front saves some time.}	
\end{itemize}
We ran all the modifications on \texttt{DUAL}, again with the time limit of 180 seconds per problem.

\begin{table}
\begin{center}
\begin{tabular}{l|cccc}
        & FF & FF-U & FF-UI & FF-UIN \\
\hline
\texttt{DUAL} &
   136 &
  204 &
 682 & 
 695 \\
\end{tabular}
\end{center}
\caption{Second experiment: number of problems from \texttt{DUAL} solved within 180 seconds by modifications of the planner FF.}
\label{table_results2}
\end{table}

The numbers of problems solved by the respective modifications are shown in Table~\ref{table_results2}.
For the sake of comparison we also repeat the result for the original FF. It can be seen that 
each of the modifications represents an improvement over the previous version. 
Probably the most is gained by incorporating the backward invariant test. Actually, 
each modification solves a strict superset of the problems solved by the previous one.
An exception is the last step where FF-UI solves 3 problems that FF-UIN cannot solve.
However, FF-UIN solves 16 problems that FF-UI cannot solve within the given time limit.


Despite our efforts to improve the performance of FF on \texttt{DUAL},
the planner still solves less problems from \texttt{DUAL} than from \texttt{ORIG}.
In our third experiment we tried to discover whether there are some problems in \texttt{DUAL}
that the improved FF-UIN can solve, while the original FF fails on their counterparts in \texttt{ORIG}.
This corresponds to the question whether our duality can be made useful in practice
by helping to solve difficult IPC benchmarks. To simplify the following discussion,
let us call by FF-DUAL a planner composed by the preprocessor, which grounds and dualizes inputs,
followed by FF-UIN. We will now compare FF and FF-DUAL on \texttt{ORIG}.

Apart from six problems from the Mystery domain, 
where FF-DUAL correctly discovers that no plan can exists while FF timeouts,
there are three domains where FF-DUAL seems to perform consistently better than FF.
Table~\ref{table_results3} reports on the number of problems solved,
categorized by the domains.

In order to better understand the success of FF-DUAL on the three domains,
we more closely analyzed and compared the output of the two versions of the planner.
In particular, we focused on the reported heuristic value of the currently expanded state.
We noticed the following facts.
\begin{itemize}

\item On the domain PSR the heuristic value of the initial state is quite low (between 1 and 10).
  This holds for both FF and FF-DUAL, but the value for FF-DUAL is typically one higher than that for FF.
  In other words, the dual version of relaxed plan heuristic seems to be more informative on PSR.
  
\item
	On Woodworking, the heuristic value of the initial state ranges from 5 up to about 70.
	FF-DUAL's values are typically not higher, but stay quite close to those of FF.
	
\item
	Although on Floortile FF's heuristic is more informed than FF-DUAL's, 
	FF's goal agenda mechanism seems to be making suboptimal decisions in decomposing the goal into sub-goals.
	On three problems where FF's enforced hill climbing fails within the time limit and the goal agenda is discarded,
	FF then successfully finds a plan with best first search. At the same time, FF-DUAL
	directly looks for a plan using best first search and its less informed heuristic.
\end{itemize}
On all the other domains FF-DUAL's heuristic value of the initial state
is typically much lower than the corresponding estimate of FF.
This seems to explain the general lower effectiveness of FF-DUAL on the \texttt{ORIG} benchmarks.

\begin{table}
\begin{center}
\begin{tabular}{lc|cccc}
                   &   & FF (unique) & FF-DUAL (unique) \\
\hline
PSR & (50) & 39 (2) & 45 \,(8)\, \\
Woodworking & (50) & 18 (2) & 44 (28) \\
Floortile  & (30) & \ 7\ \ (0) & 17 (10) \\
\end{tabular}
\end{center}
\caption{Third experiment: Comparing FF and FF-DUAL on three domains where the latter dominates the former.
Size of each domain and the number of problems uniquely solved by the respective planner are shown in parenthesis.}
\label{table_results3}
\end{table}

To sum up, in our experiments we have shown that the dual versions
of IPC benchmarks are in general much more difficult to solve by modern
planners than the originals. This can be partially remedied
by adapting a planner to make use of specific features
the dual benchmarks poses but which are usually missing in the standard ones.
Although the imagined dualizing planner FF-DUAL doesn't
beat the original FF in the overall number of solved problems,
there are certain domains where it indeed pays off to apply the duality mapping
before looking for a plan. This represents one possible application of the duality concept in practice.


\section{Conclusion}

\label{sec_conclusion}

In this paper we have described a duality mapping on the domain of all STRIPS planning tasks.
Its existence shows that computationally, there is no real difference 
between performing progression and regression as they are each other's dual.
Differences between the two that one can measure in practice follow from asymmetries (with respect to the mapping)
of the concrete benchmarks and are not inherent to the search paradigms themselves.
We believe that understanding these asymmetries and 
their influence on the efficiency of planning algorithms deserves further study.

Furthermore, we have pointed to several applications of the duality itself.
We have shown that new theoretical insights may be obtained 
by translating known notions via the mapping and analyzing the obtained duals.
For instance, there necessarily exists a ``precondition relaxation heuristic''
a dual of the famous delete relaxation heuristic.

Next we studied the dual versions of the standard IPC benchmarks
and discovered they are quite difficult to solve for modern planners.
One could argue that there is nothing interesting about difficult benchmarks in themselves
if they don't come from practical applications -- for instance,
random problems form the phase transition region (see \cite{rintanen-random}) seem to have this status.
We, however, don't think the dual IPC benchmarks fall into the same category.
After all, they still encode the same transition structures as the originals,
albeit in a non-obvious way. Therefore, we believe they should be considered
as an auxiliary test set by anyone attempting to develop a really versatile planner.

Finally, we explored the possibility of using the duality to design new algorithms.
A simple modification of the planner FF, which uses the duality, was shown 
to improve over the original system on several benchmark domains.
Note that this obvious schema of first dualizing the input and then running
a known algorithm is not the only option of how the duality can be used.
More sophisticated algorithms combining progression and regression
tied together by the duality can be imagined.






\bibliographystyle{named}
\bibliography{duality}

\begin{thebibliography}{}

\bibitem[\protect\citeauthoryear{Bonet and
  Geffner}{2001}]{bonet-geffner-heuristics}
Blai Bonet and Hector Geffner.
\newblock Planning as heuristic search.
\newblock {\em Artif. Intell.}, 129(1-2):5--33, 2001.

\bibitem[\protect\citeauthoryear{Bylander}{1994}]{bylander-strips}
Tom Bylander.
\newblock The computational complexity of propositional {STRIPS} planning.
\newblock {\em Artif. Intell.}, 69(1-2):165--204, 1994.

\bibitem[\protect\citeauthoryear{Fox and Long}{2003}]{fox-long-pddl21}
Maria Fox and Derek Long.
\newblock Pddl2.1: An extension to {PDDL} for expressing temporal planning
  domains.
\newblock {\em J. Artif. Intell. Res. (JAIR)}, 20:61--124, 2003.

\bibitem[\protect\citeauthoryear{Helmert}{2009}]{helmert-fdr}
Malte Helmert.
\newblock Concise finite-domain representations for {PDDL} planning tasks.
\newblock {\em Artif. Intell.}, 173(5-6):503--535, 2009.

\bibitem[\protect\citeauthoryear{Hoffmann and
  Nebel}{2001}]{hoffmann-nebel-ff-jair}
J{\"o}rg Hoffmann and Bernhard Nebel.
\newblock The {FF} planning system: Fast plan generation through heuristic
  search.
\newblock {\em J. Artif. Intell. Res. (JAIR)}, 14:253--302, 2001.

\bibitem[\protect\citeauthoryear{Kautz and
  Selman}{1996}]{kautz-selman-envelope}
Henry~A. Kautz and Bart Selman.
\newblock Pushing the envelope: Planning, propositional logic and stochastic
  search.
\newblock In William~J. Clancey and Daniel~S. Weld, editors, {\em AAAI/IAAI,
  Vol. 2}, pages 1194--1201. AAAI Press / The MIT Press, 1996.

\bibitem[\protect\citeauthoryear{Massey}{1999}]{massey_directions}
Bart Massey.
\newblock {\em Directions In Planning: Understanding The Flow Of Time In
  Planning}.
\newblock PhD thesis, University of Oregon, 1999.

\bibitem[\protect\citeauthoryear{McDermott}{2000}]{mcdermott-ipc-pddl}
Drew~V. McDermott.
\newblock The 1998 {AI} planning systems competition.
\newblock {\em AI Magazine}, 21(2):35--55, 2000.

\bibitem[\protect\citeauthoryear{Pettersson}{2005}]{pettersson_reversed}
Mats~Petter Pettersson.
\newblock Reversed planning graphs for relevance heuristics in {AI} planning.
\newblock In {\em Planning, Scheduling and Constraint Satisfaction: From Theory
  to Practice}, volume 117 of {\em Frontiers in Artificial Intelligence and
  Applications}, pages 29--38. IOS Press, 2005.

\bibitem[\protect\citeauthoryear{Richter and
  Westphal}{2010}]{richter-westphal-lama}
Silvia Richter and Matthias Westphal.
\newblock The {LAMA} planner: Guiding cost-based anytime planning with
  landmarks.
\newblock {\em J. Artif. Intell. Res. (JAIR)}, 39:127--177, 2010.

\bibitem[\protect\citeauthoryear{Rintanen}{1998}]{rintanen-invariant}
Jussi Rintanen.
\newblock A planning algorithm not based on directional search.
\newblock In Anthony~G. Cohn, Lenhart~K. Schubert, and Stuart~C. Shapiro,
  editors, {\em KR 2004}, pages 617--625. Morgan Kaufmann, 1998.

\bibitem[\protect\citeauthoryear{Rintanen}{2004}]{rintanen-random}
Jussi Rintanen.
\newblock Phase transitions in classical planning: An experimental study.
\newblock In Didier Dubois, Christopher~A. Welty, and Mary-Anne Williams,
  editors, {\em KR 2004}, pages 710--719. AAAI Press, 2004.

\bibitem[\protect\citeauthoryear{Rintanen}{2010}]{rintanen-mp}
Jussi Rintanen.
\newblock Heuristics for planning with {SAT}.
\newblock In David Cohen, editor, {\em CP 2010}, volume 6308 of {\em LNCS},
  pages 414--428. Springer, 2010.

\bibitem[\protect\citeauthoryear{Russell and
  Norvig}{2010}]{russel-norvig-ai-modern-approach}
Stuart~J. Russell and Peter Norvig.
\newblock {\em Artificial Intelligence - A Modern Approach (3. internat. ed.)}.
\newblock Pearson Education, 2010.

\end{thebibliography}

\end{document}